\theoremstyle{plain}
\newtheorem{theorem}{Theorem}[section]
\newtheorem{proposition}[theorem]{Proposition}
\newtheorem{lemma}[theorem]{Lemma}
\newtheorem{corollary}[theorem]{Corollary}
\theoremstyle{definition}
\theoremstyle{remark}
\icmltitlerunning{A Theoretical Formulation of Many-body MPNN}
\begin{document}

\twocolumn[
    \icmltitle{A Theoretical Formulation of Many-body Message Passing Neural Networks}
    
    
    
    \icmlsetsymbol{equal}{*}
    
    \begin{icmlauthorlist}
    
    \icmlauthor{Jiatong Han}{oxford}
    
    \end{icmlauthorlist}
    \icmlcorrespondingauthor{Jiatong Han}{jiatong.han@cs.ox.ac.uk}
    
    \icmlaffiliation{oxford}{Department of Computer Science, University of Oxford}

    \icmlkeywords{graph neural networks, message-passing neural networks, graph spectral theory, many-body problem, graph convolutional networks}

    \vskip 0.3in
]



\printAffiliationsAndNotice{}  

\begin{abstract}
    We present many-body Message Passing Neural Network (MPNN) framework that models higher-order node interactions ($\ge 2$ nodes). We model higher-order terms as tree-shaped motifs, comprising a central node with its neighborhood, and apply localized spectral filters on motif Laplacian, weighted by global edge Ricci curvatures. We prove our formulation is invariant to neighbor node permutation, derive its sensitivity bound, and bound the range of learned graph potential. We run regression on graph energies to demonstrate that it scales well with deeper and wider network topology, and run classification on synthetic graph datasets with heterophily and show its consistently high Dirichlet energy growth. 

    We open-source our code at \url{https://github.com/JThh/Many-Body-MPNN}.

\end{abstract}

\section{Introduction} 
We study a generic graph setting where no information on distances between nodes or rotations of edges is available. We construct many-body message to increase the receptive field of a single message-passing step, than the two-body case, as explicitly encoding many-body interaction reduces the need of stacking layers for message-passing \citep{batatia2023mace}. 


Prior work such as ChebNet \citep{defferrard2016convolutional} approximates the multi-hop messages from neighbors through polynomial expansions to avoid directly computing the powers of adjacency matrices. However, the spectral filters of ChebNet are making signal transformations to graphs globally, despite its efforts to localize spectral filtering through Chebyshev expansion via finite support size $k$. We apply a simple yet effective localization by explicitly defining a series of motifs \citep{monti2018motifnet} where smaller-scale spectral filters are applied.

We modify motif Laplacian transform to distinguish the contribution from each node in the motif. This involves mapping the feature vectors to the Fourier basis of the concerned motif and scaling by the edge's curvature, before mapping them back to graph domain. We enumerate the neighboring sets of varied sizes and symmetrically aggregating the filtered outcomes to ensure permutation invariance.

Our contributions are to: 1) formulate many-body message-passing, prove its invariance to neighbor node permutation, and make explicit the contribution of every body-order interaction (\cref{fm:expanded}); 2) derive the new sensitivity and energy bound of the many-body interaction paradigm (\cref{fm:bound}, \cref{thm:energy}); 3) show many-body MPNN scale well with wider and deeper network topology and achieve significant energy growth on heterophilic graphs (Section \ref{exp:graph_reg} and \ref{exp:hetero}). 


\section{Preliminary}
We study general graph structure $G$ that have node set $\mathsf{N}$ and edge set $\mathsf{E}$ of finite sizes. We use $e_{i,j}$ to denote edge between node $i$ and $j$. We use $D$ for degree matrix, $A$ for adjacency matrix, $L = (D-A)$ for unnormalized graph Laplacian, $\mathcal{L} = I - D^{-\frac{1}{2}}AD^{-\frac{1}{2}}$ as symmetric normalized Laplacian. We define correlation order $\nu$ to be the order to which we model the body interactions explicitly in our framework, including the central node. 


For different correlation orders, we use motifs (undirected graph sub-structures) to represent the local graph formed by the central node and its neighbors in consideration. We define $k$-motifs to specifically refer to undirected tree graphs with one central node and $(k-1)$ branches to localize many-body interactions of different orders. The adjacency matrix (and hence Laplacian $L$) of motifs can be assigned weights such as edge curvatures for localization. We rely on Balanced Forman curvature, a combinatorial definition of Ricci curvature as established in \citet[Def.~1]{topping2022understanding}, and we hereafter refer to it as Ricci curvature and the function $\text{Ricci}(e_{i,j})$ for curvature of edge $e_{i,j}$.


\paragraph{Potential function.} A graph's potential $\mathcal{E}$ can be measured with Dirichlet energy. Dirichlet energy is defined as 


\begin{equation}\label{fm:dirich}
    Tr(\bm{X}^T \mathcal{L}  \bm{X}) = \sum_{(i,\ j)\in \mathsf{E}}{\left\|\frac{\bm{X}_i}{\sqrt{d_i}} -\frac{\bm{X}_j}{\sqrt{d_j}} \right\|_2^2},
\end{equation} 
where $\mathcal{L} $ is the normalized Laplacian, $\bm{X}$ is the set of node features, and $d$'s are node degrees.

\section{Formulation of Many-body MPNN}
The generalized message construction involving all correlation orders $\nu$\footnote{Here we have overloaded the notation $\nu$ to denote many-body interaction function, aside from its initial meaning of correlation orders.} as defined by \citet{batatia2023mace} is :

\begin{equation}\label{fm:vlmpnn}
    m_i^{(t)} = \sum_{j} \nu_1(\sigma_i^{(t)}, \sigma_j^{(t)}) + \sum_{j_1,j_2} \nu_2(\sigma_i^{(t)}, \sigma_{j_1}^{(t)}, \sigma_{j_2}^{(t)}) + \cdots,
\end{equation}

where $j$'s are node $i$'s neighbors and $\sigma_j^{(t)}$ is the learned representation of node $j$ for computing $\nu$-th correlation strength at layer $t$.

For modelling interactions of different body orders, we follow the format of \cref{fm:vlmpnn} and avoid explicit message passing via spectral filters. We start by computing the two-body interaction messages using graph convolution. This involves apply graph Fourier transform to neighboring nodes, weighting their contributions via learnable parameters, and localizing the signal transformation through second-order Chebyshev expansion. 

The two-body interaction component of our model captures the pairwise relationships between nodes. By incorporating node features from previous iterations, we compute the interaction messages as follows:

\begin{equation}\label{fm:2body}
    \bm{X}^{(t)} = \bm{U}^\top g_{\theta_2}(\bm{\Lambda}) \bm{U} \bm{H}^{(t-1)},
\end{equation}
where \( \bm{X}^{(t)} \) represents the two-body interaction matrix at iteration \( t \), \( \bm{U} \) is the matrix of eigenvectors of $G$'s normalized Laplacian. The function \( g_{\theta_2}(\bm{\Lambda}) \) denotes a second-order Chebyshev polynomial expansion applied to the diagonal matrix \( \bm{\Lambda} \), which holds the eigenvalues of the Laplacian matrix of the graph \( G \). 

Higher-order interaction message (\cref{fm:msg}) captures the complex interactions involving more than two nodes, utilizing motif-based structures for higher-order relations. We draw on the intuition from Windowed Fourier Transform (WFT) to apply graph signal filtering to subgraphs, specifically motifs sized according to the correlation order. These motifs are standardized undirected subgraphs centered around a node with neighborhoods of size ($\nu$). The output message for each node maintains permutation invariance by explicitly enumerating (by $\eta_{\nu}$) all neighboring sets of size $k \le \nu$. 

To better locate each motif in global graph $G$, we modify the motif Laplacian to account for Ricci curvature. Edges with more negative Ricci curvature are assigned higher positive weights in the local Laplacian of $G$'s motif, reflecting more critical connections in the graph structure: 


\[
L_{\text{Ricci-J}_{(i,j)}} = 
\begin{cases}
    \sum_{j \neq i} \text{Ricci}(e_{i, j}) & \text{if } i = j \\
    - \text{Ricci}(e_{i, j}) & \text{if } i \neq j \\
    0 & \text{otherwise},
\end{cases}
\]

where $J$ is the neighbor node set of node $i$ and $j \in J$. 



The higher-order message $\bm{Y}^{(t)}$ is hence formulated as (with $\nu \ge 3$):

\begin{equation}
\bm{Y}^{(t)}_{i} = \prod_{k=3}^{\nu} \sum_{\substack{J \subseteq \eta(N(i)) \cup \{i\}, \\ |J| = k}} \bm{U}_k^\top g_{\theta_{k}}(\bm{\Lambda}_{\text{Ricci-J}}) \bm{U}_k \bm{H}^{(t-1)}_{J},\label{fm:msg}
\end{equation}

where $\eta$ is the enumeration of element sets from the neighbors of node $i$. $U_{k}^\top$ is the inverse or transpose of eigenvectors of $L_{\text{Ricci-J}}$. And $g_{\theta_{k}}(\bm{\Lambda}_{\text{Ricci-J}})$ term is defined as:

\begin{equation}\label{fm:cheb}
g_{\theta_{k}}(\bm{\Lambda}_{\text{Ricci-J}}) = \sum_{k'=1}^{k} \theta_{k,k'} T_{k'}(\widetilde{\bm{\Lambda}_{\text{Ricci-J}}}),
\end{equation}

where $\widetilde{\bm{\Lambda}_{\text{Ricci-J}}} = \frac{2\bm{\Lambda}_{\text{Ricci-J}}}{\lambda_{\text{max-J}}} - I_{\lvert J \rvert}$ and $\bm{\Lambda}_{\text{Ricci-J}}$ is the diagonal matrix of eigenvalues of $\lvert J \rvert$-motif's Laplacian $L_{\text{Ricci-J}}$ whose weights are obtained from the edge set $\mathsf{E}_{i, J} \coloneqq \{ e_{i, j} \mid j \in J \}$ with respect to $G$, with $\lambda_{\text{max-J}}$ representing the largest eigenvalue of the Laplacian $L_{\text{Ricci-J}}$.





And the feature update equation, which updates the node features by combining the self-features and aggregated neighbor messages, establishes a residual connection:
\begin{align}
    \bm{h}_i^{(t)} &= \bm{h}_i^{(t-1)} + \bm{W}^{(t)} m^{(t)}_i \nonumber \\
    &=  \bm{h}_i^{(t-1)} + \bm{W}_{x}^{(t)}\bm{X}^{(t)}_{i} + \bm{W}_{y}^{(t)}\bm{Y}_i^{(t)} \label{fm:update}
\end{align}


\section{Many-Body Mixing Bound: Sensitivity Bound}\label{sec:sensit}

For node $i$ at layer $t$, the update function of all body orders can be formulated as:

\begin{align}
    \bm{h}_i^{(t)} &= \sum_{k'=0}^{2} \theta_{2,k'}^{(t)} T_{k'}(\widetilde{\mathcal{L}}) \bm{h}_i^{(t-1)} \nonumber \\
    &\quad +  \prod_{k=3}^{\nu} \sum_{\substack{J \subseteq \eta_k(N(i))}} \sum_{k'=1}^{k} \theta_{k,k'}^{(t)} T_{k'}(\widetilde{L_{\text{Ricci-J}}}) \bm{H}_J^{(t-1)} 
    \label{fm:expanded}
\end{align}

, where $\bm{h}_i^{(0)} = \bm{x}_i$, with the residual term included when $k' = 0$.

The over-squashing effect can be understood with node's representation $\bm{h}_u^{(r)}$ failing to be affected by some input feature $\bm{x}_{v}$ of node $v$ at distance $r$ from node $u$. We use the Jacobian ($\partial \bm{h}_{u}^{(r+1)} / \partial \bm{x}_{v}$) to assess over-squashing and derive sensitivity bounds, similar as \citet{topping2022understanding}.

\begin{theorem}\label{thm:bound}
    The sensitivity bound of a many-body MPNN with update function as defined in \cref{fm:expanded} is given by

    \begin{equation}
        O\left(\left| \frac{\partial \bm{h}_{u}^{(r+1)}}{\partial \bm{x}_{v}} \right|\right) =  O\left((\mathcal{A}^{r})_{vu} \bm{x}_v^{\nu^{r}}\right) \label{fm:bound}
    \end{equation}
    where $\nu \ge 2$ is the correlation order, $r$ is the shortest distance between node $v$ and node $u$.
    
\end{theorem}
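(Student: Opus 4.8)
The plan is to unroll the recursion in \cref{fm:expanded} down to the input layer $\bm{h}^{(0)}=\bm{x}$ and then differentiate term by term, following the walk-counting strategy of \citet{topping2022understanding} but adapted to the multilinear many-body term. At each layer the update splits into a linear two-body part, $\sum_{k'=0}^{2}\theta_{2,k'}^{(t)}T_{k'}(\widetilde{\mathcal{L}})\bm{h}_i^{(t-1)}$, and a multilinear many-body part, the product $\prod_{k=3}^{\nu}(\cdots)\bm{H}_J^{(t-1)}$. Applying the chain rule to the linear part and the product rule to the many-body part expresses $\partial\bm{h}_u^{(r+1)}/\partial\bm{x}_v$ as a sum over all layer-by-layer differentiation choices, and the two advertised factors in \cref{fm:bound} come from separately accounting for (i) the topological spread and (ii) the growth of the polynomial degree in $\bm{x}_v$.

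For the constant and topological factors, I would first bound the operator norms of the learnable coefficients $\theta$, the residual weights $\bm{W}^{(t)}$, and the Chebyshev operators; the latter are controlled because $\widetilde{\mathcal{L}}$ and $\widetilde{L_{\text{Ricci-J}}}$ are rescaled to have spectrum in $[-1,1]$ (see \cref{fm:cheb}), where $|T_{k'}|\le 1$. All of these are absorbed into $O(\cdot)$. The topological content then comes from the support of the filters: $T_{k'}(\widetilde{\mathcal{L}})$ is entrywise bounded, up to constants, by $\mathcal{A}^{k'}$, and each motif filter $\bm{U}_k^\top g_{\theta_k}(\bm{\Lambda}_{\text{Ricci-J}})\bm{U}_k$ acts only within the $1$-hop neighbourhood of its centre. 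Composing these supports across the $r+1$ layers, a nonzero Jacobian requires a walk of length $r$ from $v$ to $u$, and the accumulated weight is $O((\mathcal{A}^r)_{vu})$.

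For the exponent in $\bm{x}_v$, I would observe that the many-body part at each layer is a product of $O(\nu)$ factors, each linear in the previous-layer features, so every hop along the length-$r$ path from $v$ to $u$ multiplies the number and degree of terms in which $\bm{x}_v$ appears by a factor $O(\nu)$. Compounding this across the $r$ hops yields the exponent $\nu^r$, while the linear two-body part only contributes degree one and is dominated whenever $\nu\ge 3$; for $\nu=2$ the product is empty and the bound reduces to the purely linear, two-body case, consistent with $\nu^r=2^r$ as a loose upper bound.

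The main obstacle is combining these two accountings cleanly, since the product rule generates cross terms that simultaneously carry topological spread and polynomial degree, and I must verify that the worst case does not exceed $(\mathcal{A}^r)_{vu}\,\bm{x}_v^{\nu^r}$, including the precise bookkeeping of the exponent relative to the number of layers. The delicate quantitative step is to confirm that the curvature-weighted motif Laplacian $L_{\text{Ricci-J}}$ does not enlarge the effective receptive field beyond the $1$-hop motif support, i.e.\ that its eigendecomposition yields a filter whose entries remain supported on the motif, so that the reachability factor stays $(\mathcal{A}^r)_{vu}$; this requires a uniform spectral bound on $L_{\text{Ricci-J}}$ (hence on $\lambda_{\text{max-J}}$ and the rescaling in \cref{fm:cheb}) across all enumerated subsets $J$. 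Once that is in place, collecting the constant, topological, and degree factors gives \cref{fm:bound}.
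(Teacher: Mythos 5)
Your proposal matches the paper's own argument in both of its essential ingredients: unrolling the recursion of \cref{fm:expanded} along paths from $v$ to $u$, so that the topological factor $(\mathcal{A}^r)_{vu}$ arises from walk counting with the normalized adjacency, and the exponent $\nu^r$ arises from the multiplicative compounding of the polynomial degree of $\bm{x}_v$ at each of the $r$ hops through the many-body product term. The paper's version (an inductive polynomial representation $\bm{h}^{(r+1)}_u = \sum_{p}\sum_{k}\bm{c}_{p_r,k}(\bm{h}^{(r)}_{p_r})^k(\bm{S}^{(r)}_{p_r})^{\nu-k}$ with constants absorbed into the $\bm{c}$'s) is an informal sketch at essentially the same level of rigor as yours, and the technical caveats you flag (cross terms, motif filter support) are likewise left unaddressed there.
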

The powers of $x_v$ in \cref{fm:bound} come from the product term $\prod$ from \cref{fm:msg}, different from an MPNN's sensitivity bound that does not usually involve node initializations \citep{digiovanni2024does}, as they are negligible constants. 



\section{Properties of Many-Body MPNN}\label{sec:prop}

\subsection{Invariance Property}

We reconstruct message construction as $\mathcal{F}_m(\bm{H}^{(t-1)}, L) = W^{(t)} m^{(t)}$ from \cref{fm:update}. 


\begin{theorem}\label{thm:invar}
The message construction function ($\mathcal{F}_m$) is invariant to the permutation of neighbors $N(i)$ of input nodes $i \in \mathsf{N}$, considering different body numbers $\nu$, i.e.
\begin{equation}
    \mathcal{F}_m(\bm{H}^{(t-1)}, L) = \mathcal{F}_m(\bm{H}^{(t-1)}, \pi^\top L\pi),
\end{equation}
assuming $\pi$ is the permutation matrix.
\end{theorem}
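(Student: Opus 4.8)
The plan is to split $\mathcal{F}_m$ into its two-body and higher-order parts and argue invariance of each separately, since the channel mixing $\bm{W}^{(t)}$ in \cref{fm:update} acts only on feature dimensions and commutes with any node relabeling. I read the neighbor permutation $\pi$ as acting simultaneously on the Laplacian ($L \mapsto \pi^\top L\pi$) and on the feature rows ($\bm{H}^{(t-1)} \mapsto \pi^\top \bm{H}^{(t-1)}$), and I use the fact that such a $\pi$ fixes the central node $i$. It then suffices to show that the node-$i$ messages $\bm{X}_i^{(t)}$ and $\bm{Y}_i^{(t)}$ are unchanged, because $(\pi^\top v)_i = v_{\pi(i)} = v_i$ whenever $\pi(i)=i$.

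For the two-body component I would first rewrite the spectral filter of \cref{fm:2body} as a matrix polynomial in the global Laplacian: by the Chebyshev expansion in \cref{fm:cheb}, $\bm{U}^\top g_{\theta_2}(\bm{\Lambda})\bm{U} = g_{\theta_2}(\mathcal{L})$. The key fact is that matrix polynomials are equivariant under conjugation by a permutation, $g_{\theta_2}(\pi^\top \mathcal{L}\pi) = \pi^\top g_{\theta_2}(\mathcal{L})\pi$, which follows from $\pi\pi^\top = I$ applied termwise to each power. Feeding in the permuted features then gives $g_{\theta_2}(\pi^\top\mathcal{L}\pi)(\pi^\top\bm{H}^{(t-1)}) = \pi^\top g_{\theta_2}(\mathcal{L})\bm{H}^{(t-1)} = \pi^\top \bm{X}^{(t)}$, and reading off row $i$ (fixed by $\pi$) yields $\bm{X}_i^{(t)}$ invariant.

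For the higher-order component the central observation is that the outer summation $\sum_{J\subseteq \eta(N(i))\cup\{i\},\,|J|=k}$ in \cref{fm:msg} is symmetric: $\pi$ induces a bijection $J \mapsto \pi(J)$ on the collection of admissible size-$k$ subsets, so the sum merely reorders its summands. To finish, I would show each summand is equivariant. The curvature weights entering $L_{\text{Ricci-J}}$ are attached to the symmetric edges $e_{i,j}$, so $\text{Ricci}(\cdot)$ is carried along under the induced sub-permutation $\pi_J$; hence $L_{\text{Ricci-}\pi(J)} = \pi_J^\top L_{\text{Ricci-J}}\pi_J$, the motif spectrum $\bm{\Lambda}_{\text{Ricci-J}}$ is preserved, and expressing $\bm{U}_k^\top g_{\theta_k}(\bm{\Lambda}_{\text{Ricci-J}})\bm{U}_k$ again as a polynomial in $L_{\text{Ricci-J}}$ makes the filtered block transform equivariantly with $\bm{H}_J^{(t-1)}$. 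The subset bijection then gives invariance of the whole sum, and since a product of invariant factors is invariant, the product $\prod_{k=3}^{\nu}$ inherits it.

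The step I expect to be the main obstacle is the bookkeeping at the motif level: I must verify that a neighbor relabeling consistently permutes the rows and columns of each motif Laplacian \emph{and} the corresponding feature block $\bm{H}_J^{(t-1)}$, so that the spectral operation is genuinely invariant to the internal ordering of a motif and not only to which subset is selected. I would make this precise by fixing, for each $k$, a canonical ordering convention placing the central node first and the $(k-1)$ branches afterwards, and checking that two orderings related by $\pi_J$ produce identical filtered outputs because they share eigenvalues and have conjugate eigenvector matrices. Once this per-motif invariance and the subset bijection are in place, combining them with the channel-wise action of $\bm{W}^{(t)}$ delivers the stated identity $\mathcal{F}_m(\bm{H}^{(t-1)}, L) = \mathcal{F}_m(\bm{H}^{(t-1)}, \pi^\top L\pi)$.
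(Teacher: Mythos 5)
Your proposal is correct and substantially more careful than the paper's own argument, but it follows a genuinely different route, so a comparison is in order. The paper's proof (i) assumes node features are initialized uniformly across the graph, (ii) inducts on $\nu$, handling $\nu=2$ by moving $\pi$ inside the eigendecomposition --- writing $\mathcal{F}_m(\bm{H}^{(t-1)},\pi^\top L\pi)=W^{(t)}\bm{U}^\top(\pi^\top g_{\theta_2}(\bm{\Lambda})\pi)\bm{U}\bm{H}^{(t-1)}$ and asserting this is ``an actual permutation of original outcomes'' --- and (iii) for $\nu\ge 3$ argues that relabeling preserves topology, hence Ricci curvatures and motif spectra, then appeals to a ``fair'' enumeration $\eta$ under which each neighbor is equally likely to occupy each motif position. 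You instead use the polynomial functional calculus $\bm{U}^\top g_{\theta_2}(\bm{\Lambda})\bm{U}=g_{\theta_2}(\mathcal{L})$ plus conjugation-equivariance $g_{\theta_2}(\pi^\top\mathcal{L}\pi)=\pi^\top g_{\theta_2}(\mathcal{L})\pi$ for the two-body term, and, for the higher-order term, a deterministic bijection $J\mapsto\pi(J)$ on the size-$k$ subsets combined with per-motif conjugation by the induced sub-permutation $\pi_J$, disciplined by your canonical ``central node first'' ordering; this replaces the paper's probabilistic ``fair $\eta$'' step with an exact symmetry argument and makes explicit the motif-level bookkeeping that the paper glosses over. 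One divergence to flag: you prove invariance under the \emph{joint} action of $\pi$ on $L$ and on the rows of $\bm{H}^{(t-1)}$, read off at the central node fixed by $\pi$, whereas the theorem's displayed equation permutes only $L$ and keeps $\bm{H}^{(t-1)}$ fixed; as your own two-body computation shows, that literal statement yields $\pi^\top\bm{X}^{(t)}$ rather than $\bm{X}^{(t)}$ and is false for generic features, which is precisely why the paper needs its uniform-initialization assumption. Under that assumption the two readings coincide, so your route proves the result the theorem intends while dispensing with the assumption, and with tighter algebra --- the paper's conjugation step as written even misplaces $\pi$ relative to $\bm{U}$, since $\pi^\top L\pi=(\bm{U}\pi)^\top\bm{\Lambda}(\bm{U}\pi)$ gives $\pi^\top\bm{U}^\top g_{\theta_2}(\bm{\Lambda})\bm{U}\pi$, not $\bm{U}^\top\pi^\top g_{\theta_2}(\bm{\Lambda})\pi\bm{U}$ --- whereas the paper's route buys only brevity.
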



\subsection{Energy Bounds}

\begin{theorem}
   Many-body MPNN with finite number of layers ($t$) learns the graph potential ($\mathcal{E}$) into a bounded range: $$\bigg[0,\quad \lambda_{\text{max}} \lvert \mathsf{N} \rvert\bigg(\prod_{\nu} \nu^t{d_{\text{max}} \choose \nu-1}^t \prod_t \bm{w}^{(t)}_{\nu} \bm{h}\bigg)^2  \bigg],$$ where $d_{\text{max}}$ is the maximum node degree and correlation order $\nu \ge 2$, and $\bm{w}^{(t)}, \bm{h}$ are assumed constant upper bounds for weights of layer $t$, and the initial node features. \label{thm:energy}
\end{theorem}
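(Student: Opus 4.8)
The plan is to prove the two endpoints separately and to reduce the upper bound to a recursive norm estimate on the node features. For the lower bound, recall from \cref{fm:dirich} that the potential is the Dirichlet energy $\mathcal{E}(\bm{X}^{(t)}) = Tr\big((\bm{X}^{(t)})^\top \mathcal{L}\, \bm{X}^{(t)}\big)$, which equals a sum of squared norms; since the normalized Laplacian $\mathcal{L}$ is positive semi-definite, this quadratic form is non-negative, giving the lower endpoint $0$. For the upper bound, I would first apply the trace form of the Rayleigh bound, $Tr\big(\bm{X}^\top \mathcal{L}\, \bm{X}\big) \le \lambda_{\text{max}}\,\|\bm{X}\|_F^2$ (using $\mathcal{L} \preceq \lambda_{\text{max}} I$), and then $\|\bm{X}^{(t)}\|_F^2 = \sum_{i \in \mathsf{N}} \|\bm{h}_i^{(t)}\|_2^2 \le |\mathsf{N}|\,\max_{i} \|\bm{h}_i^{(t)}\|_2^2$. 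This isolates the whole problem into controlling $M_t := \max_{i \in \mathsf{N}} \|\bm{h}_i^{(t)}\|_2$ after $t$ layers.

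Next I would set up a one-layer recursion on $M_t$ using the expanded update in \cref{fm:expanded} together with the triangle inequality. The crucial analytic ingredient is that every Chebyshev filter is evaluated on a Laplacian rescaled to have spectrum in $[-1,1]$, so $\|T_{k'}(\widetilde{\mathcal{L}})\| \le 1$ and $\|T_{k'}(\widetilde{L_{\text{Ricci-J}}})\| \le 1$ in operator norm. Because $\bm{U}_k$ is orthonormal, the motif filter obeys $\|\bm{U}_k^\top g_{\theta_k}(\bm{\Lambda}_{\text{Ricci-J}}) \bm{U}_k\| \le \sum_{k'=1}^{k} |\theta_{k,k'}| \le \nu\, \bm{w}^{(t)}_\nu$ from \cref{fm:cheb}, i.e.\ the number of Chebyshev terms (at most $\nu$) times the weight bound. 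The neighbor-set enumeration $\sum_{J,\,|J|=k}$ contributes at most $\binom{d_{\text{max}}}{k-1}$ summands, since each $J$ selects $k-1$ neighbors of $i$ from at most $d_{\text{max}}$. Collecting these, the two-body term and each order-$k$ factor are each bounded by a constant multiple of $\nu \binom{d_{\text{max}}}{k-1} \bm{w}^{(t)}_\nu M_{t-1}$.

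Finally I would account for the product $\prod_{k=3}^{\nu}$ in \cref{fm:msg}. Treating all but one factor as bounded linear operators acting on the single feature-carrying factor $\bm{H}_J^{(t-1)}$, the operator-norm bounds from the previous step multiply together, contributing $\prod_\nu \nu \binom{d_{\text{max}}}{\nu-1} \bm{w}^{(t)}_\nu$ in a single layer while the feature enters only to first order; unrolling the recursion over the $t$ layers raises each factor to the $t$-th power and collects $\prod_t \bm{w}^{(t)}_\nu$, with the base case $M_0 = \max_i \|\bm{x}_i\|_2 \le \bm{h}$ supplying the initial-feature bound. Substituting $M_t \le \prod_\nu \nu^t \binom{d_{\text{max}}}{\nu-1}^t \prod_t \bm{w}^{(t)}_\nu\, \bm{h}$ into $\lambda_{\text{max}} |\mathsf{N}|\, M_t^2$ yields the claimed upper endpoint. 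The main obstacle is exactly the handling of this product $\prod_{k=3}^{\nu}$: a naive factor-wise norm bound would introduce powers of the feature norm (as in the sensitivity bound \cref{fm:bound}), so to recover the feature-linear form inside the square one must argue that only one factor propagates the feature while the remaining factors act as bounded operators, and then verify that the binomial enumeration and Chebyshev-term counts compound across both body orders and layers to produce exactly the stated exponents.
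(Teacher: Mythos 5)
Your proposal follows essentially the same route as the paper's own proof: bound the per-node feature magnitude layer by layer with exactly the same counting factors ($\nu$ Chebyshev terms, $\binom{d_{\text{max}}}{\nu-1}$ neighbor subsets, the weight bound $\bm{w}_\nu^{(t)}$), unroll the recursion over $t$ layers to produce the $t$-th powers, and then convert the feature bound into the energy bound; your Rayleigh--Frobenius step $Tr(\bm{X}^\top \mathcal{L}\bm{X}) \le \lambda_{\text{max}}\|\bm{X}\|_F^2 \le \lambda_{\text{max}}\lvert\mathsf{N}\rvert \max_i \|\bm{h}_i\|_2^2$ is in fact a cleaner statement of what the paper phrases loosely as the maximum being attained when $H^{(1)}$ aligns with the top eigenvector. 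The one genuine point of divergence is your handling of the product $\prod_{k=3}^{\nu}$ in \cref{fm:msg}, and you have correctly identified it as the crux: taken literally, every factor of that product contains $\bm{H}^{(t-1)}_J$, so a factor-wise bound yields a feature power $\bm{h}^{\nu-2}$ per layer (consistent with the $\bm{x}_v^{\nu^r}$ powers in the sensitivity bound \cref{fm:bound}), not the feature-linear form inside the square claimed by \cref{thm:energy}. Your fix --- declaring that only one factor carries the features while the remaining factors act as bounded operators --- is a reinterpretation of the model rather than a consequence of \cref{fm:msg}; but the paper's own proof does no better: it simply asserts $\lvert \bm{Y}_i^{(1)}\rvert \le \prod_3^\nu \nu \binom{d_{\text{max}}}{\nu-1}\bm{w}_\nu \bm{h}$ ``with similar analysis,'' silently performing the same linearization that you at least make explicit. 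So your proposal matches the paper's argument step for step and is more candid about the only step that is actually problematic; to make either proof airtight one would need to either build the single-feature-carrying-factor convention into the definition of $\bm{Y}^{(t)}$, or add an assumption such as $\bm{h} \le 1$ so that $\bm{h}^{\nu-2} \le \bm{h}$.
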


We derive from \cref{thm:energy} that many-body MPNN has higher energy upper-bound than ChebNets, and higher-order terms produce strictly more energy than lower-order terms, given the same number of layers $t$. See \cref{app:energy}.



\subsection{Complexity Bounds}

\begin{proposition}\label{thm:time}
    There exists a simplified implementation of many-body MPNN whose runtime is linear in $(\lvert \mathsf{E} \rvert+\lvert \mathsf{N} \rvert)$ and has time complexity $O(2\lvert \mathsf{E} \rvert + \lvert \mathsf{N} \rvert(d_{\text{max}})^{\nu-1})$ per layer, where $d_{\text{max}}$ is the maximum node degree and correlation order $\nu \ge 2$. 
\end{proposition}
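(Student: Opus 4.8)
The plan is to bound the cost of one layer by separately accounting for the two contributions in the update function \cref{fm:expanded}: the two-body Chebyshev term and the higher-order motif term. Throughout I would treat the correlation order $\nu$ and the feature dimension as fixed constants, so that the asymptotics are in the graph quantities $\lvert \mathsf{N} \rvert$, $\lvert \mathsf{E} \rvert$, and $d_{\text{max}}$. The ``simplified implementation'' I have in mind never materializes the dense operators $g_{\theta_2}(\bm{\Lambda})$ or the global eigenbasis $\bm{U}$ of \cref{fm:2body}; instead it evaluates every filter through sparse matrix--vector recurrences and small per-motif eigendecompositions.

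For the two-body term, the key observation is that the second-order Chebyshev filter $\sum_{k'=0}^{2}\theta_{2,k'}T_{k'}(\widetilde{\mathcal{L}})\bm{h}^{(t-1)}$ can be computed with the three-term recurrence $T_{k'}(\widetilde{\mathcal{L}})\bm{h} = 2\widetilde{\mathcal{L}}\,T_{k'-1}(\widetilde{\mathcal{L}})\bm{h} - T_{k'-2}(\widetilde{\mathcal{L}})\bm{h}$, exactly as in ChebNet, so that $\bm{U}$ is never formed. Since $\mathcal{L}$ is symmetric with one nonzero on the diagonal per node and one nonzero per edge endpoint, it has $\lvert \mathsf{N} \rvert + 2\lvert \mathsf{E} \rvert$ nonzeros, and each application of $\widetilde{\mathcal{L}}$ to a feature vector is a sparse product costing $O(\lvert \mathsf{N} \rvert + 2\lvert \mathsf{E} \rvert)$. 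A constant number (here two) of such products yields the $O(2\lvert \mathsf{E} \rvert + \lvert \mathsf{N} \rvert)$ part of the claimed bound; this is the term that is genuinely linear in $\lvert \mathsf{E} \rvert + \lvert \mathsf{N} \rvert$.

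For the higher-order term I would count the motifs enumerated by $\eta$. Fixing a node $i$ of degree $d_i$ and an order $k$ with $3 \le k \le \nu$, the sets $J$ containing $i$ together with $k-1$ of its neighbors number $\binom{d_i}{k-1}$. Each such motif has a Laplacian $L_{\text{Ricci-J}}$ of fixed size $\lvert J \rvert = k \le \nu$, so its eigendecomposition, the evaluation of \cref{fm:cheb}, and the application of $\bm{U}_k^\top g_{\theta_k}(\bm{\Lambda}_{\text{Ricci-J}})\bm{U}_k$ all cost $O(\nu^3) = O(1)$. Summing over orders and then over nodes gives $\sum_{i}\sum_{k=3}^{\nu}\binom{d_i}{k-1} = O\!\big(\tsum_i d_i^{\nu-1}\big) \le O\!\big(\lvert \mathsf{N} \rvert\, d_{\text{max}}^{\nu-1}\big)$, where the binomial sum collapses to its top term because $\nu$ is constant; the outer product $\prod_{k=3}^{\nu}$ in \cref{fm:msg} combines only $\nu-2 = O(1)$ already-computed factors and so does not change the order.

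The main obstacle I anticipate is making rigorous that the per-motif work is genuinely constant rather than hiding a dependence on the graph: this requires arguing that the motif size is capped at $\nu$ independently of $d_{\text{max}}$, that the Ricci weights $\text{Ricci}(e_{i,j})$ needed to build each $L_{\text{Ricci-J}}$ are either precomputed once (amortized outside the per-layer cost) or computable in $O(1)$ per edge, and that the enumeration $\eta$ can be run in time proportional to its output size rather than by scanning a full power set. Combining the two bounds then yields $O(2\lvert \mathsf{E} \rvert + \lvert \mathsf{N} \rvert\,d_{\text{max}}^{\nu-1})$ per layer, which collapses to a genuinely linear $O(\lvert \mathsf{E} \rvert + \lvert \mathsf{N} \rvert)$ cost whenever $\nu = 2$ or the degree is bounded.
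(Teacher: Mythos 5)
Your proof arrives at the stated bound, but by a genuinely different route than the paper. The paper's ``simplified implementation'' changes the model itself: it drops the Ricci weights entirely, replacing them with learnable per-order parameters inside the Chebyshev expansion (\cref{fm:cheb}), and pre-computes the eigendecomposition of the \emph{unweighted} motif Laplacian --- since all $k$-motifs are standardized trees, there is exactly one such Laplacian per order $k \le \nu$, so this is a constant amount of work done once. As a result, neither the curvature cost $O(\lvert \mathsf{E} \rvert d_{\text{max}}^2)$ nor any per-motif eigendecomposition appears in the per-layer count, and the claimed bound holds with $\nu$ kept explicit. (The paper in fact first proves the larger bound $O(\lvert \mathsf{E} \rvert d_{\text{max}}^2 + 2\lvert \mathsf{E} \rvert + \lvert \mathsf{N} \rvert(3d_{\text{max}})^{\nu-1}\nu^3)$ for a curvature-keeping variant that discretizes curvatures into $\{-1,0,1\}$, and only then notes this further simplification.) You instead keep the Ricci-weighted Laplacians, amortize the curvature computation as one-time preprocessing, and eigendecompose every motif at cost $O(\nu^3)$, which you absorb by declaring $\nu$ a constant. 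That is internally consistent, and your motif count $\sum_i \sum_{k=3}^{\nu} \binom{d_i}{k-1} = O\big(\lvert \mathsf{N} \rvert\, d_{\text{max}}^{\nu-1}\big)$ and your sparse three-term recurrence for the two-body term are both sound (the latter is exactly the ChebNet argument the paper cites). What your route buys is fidelity to the original curvature-weighted formulation; what it costs is a hidden $\nu^3$ per-layer factor and an $O(\lvert \mathsf{E} \rvert d_{\text{max}}^2)$ preprocessing pass --- precisely the two terms the paper's simplification was designed to eliminate --- so your argument proves a slightly weaker statement whenever $\nu$ is regarded as a parameter rather than a constant. One small correction: your fallback that $\text{Ricci}(e_{i,j})$ might be ``computable in $O(1)$ per edge'' is not available, since Balanced Forman curvature requires counting triangles and $4$-cycles at a cost of up to $O(d_{\text{max}}^2)$ per edge; only the amortization branch of your either-or survives.
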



We prove in \cref{thm:time} that despite the additional complexity of constructing higher-order messages, the many-body implementation can execute at comparable speed as the two-body case in its optimal state. See \cref{app:benchmark}.

\section{Experiments}
We demonstrate the proven properties of many-body MPNN through graph regression and node classification tasks, illustrating its capability to scale with more extensive and deeper architectures and to capture complex local node interactions.

\subsection{Experimental Settings.}\label{sec:exp}
Due to the sheer computational complexity of approximating Ricci curvatures and weighted motif Laplacian, we follow \cref{thm:time} to simplify our implementation. We compare our model performances with two-body MPNNs and other graph convolutional networks such as GCNs \citep{kipf2017semisupervised} and ChebNets \citep{defferrard2016convolutional}.




\subsection{Regressing Synthetic Random Graph Energies}\label{exp:graph_reg}

We experiment with random graphs and synthesized energy functions. We design our energy functions through applying non-linear transformations such as exponential or logarithmic functions on graph average shortest path lengths or average clustering coefficient, to emphasize either distant node mixing or local node clustering. 

We experiment with 100 Erdős–Rényi Graph having 500 to 700 nodes with edge probabilities 0.15 to 0.3. We observe in \cref{fig:random} that when emphasizing node distances, stacking more many-body MPNN layers or decreasing hidden dimensions let it perform observably better than others, yet \textbf{the opposite} is witnessed when emphasizing local clustering, which we understand as wider neural networks benefit learning more complex local node interactions. 


\begin{figure*}[ht]
    \centering
    \includegraphics[width=0.85\linewidth]{./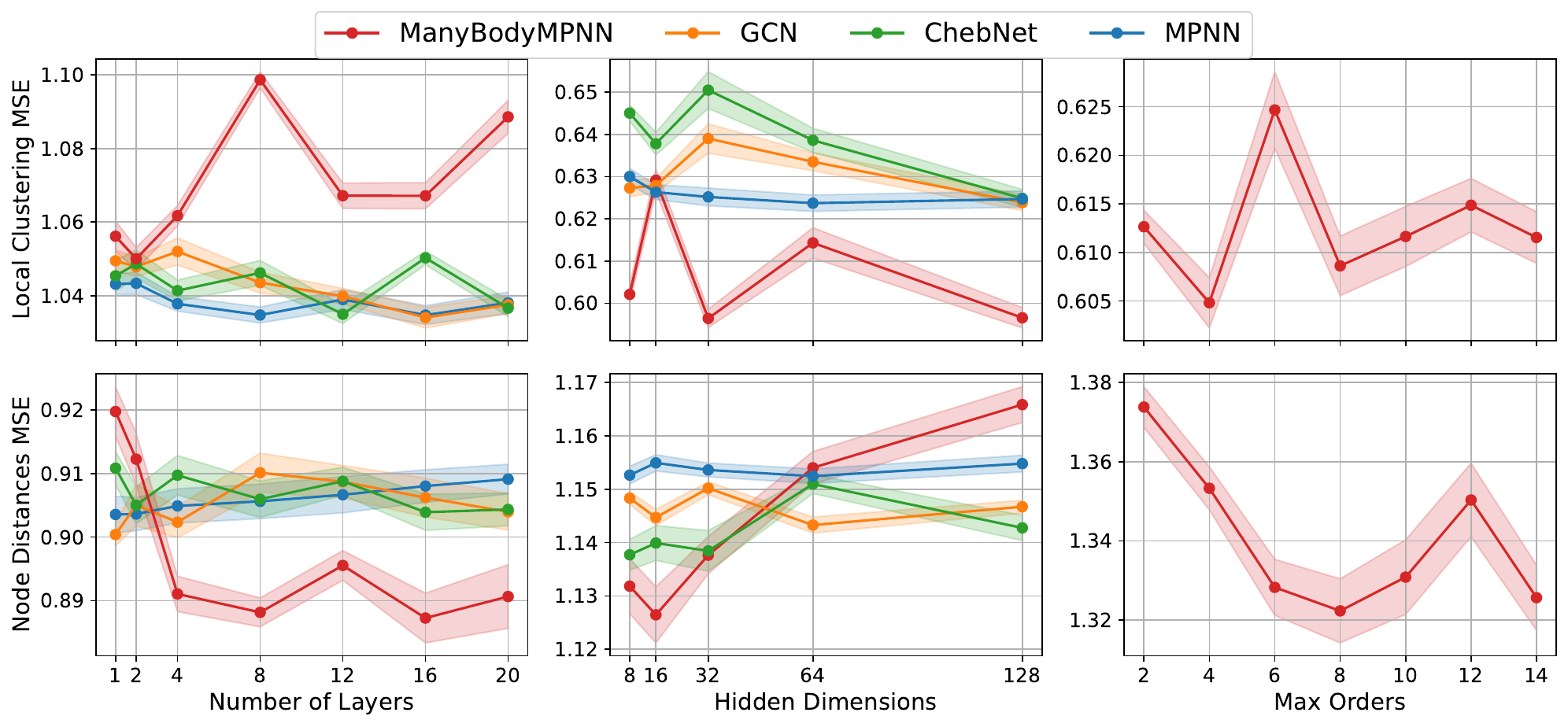}
    \caption{Test MSE Losses on 100 synthetic random graphs. We show the impact of varying number of layers, hidden dimensions, and max correlation orders on the energy regression. The upper row is regressing graph energies emphasizing clustering, with the lower emphasizing node distances.}
    \label{fig:random}
\end{figure*}

\subsection{Classifying Synthetic Heterophilic Graph Nodes}\label{exp:hetero}

We synthesize graph datasets for node classification task that exhibits highly heterophilic properties to illustrate graph learning capabilities. We generate a single heterophilic graph with 10,000 nodes with 7 classes, each of which has 1,433 dimensional features, with an average node degree of 10, as a larger replacement of Cora \citep{chen2018fastgcn}. We measure the test accuracy and the Dirichlet energy growth over epochs (see Figure \ref{fig:hetero_test}). 

\begin{figure}[ht]
    \centering
    \includegraphics[width=\linewidth]{./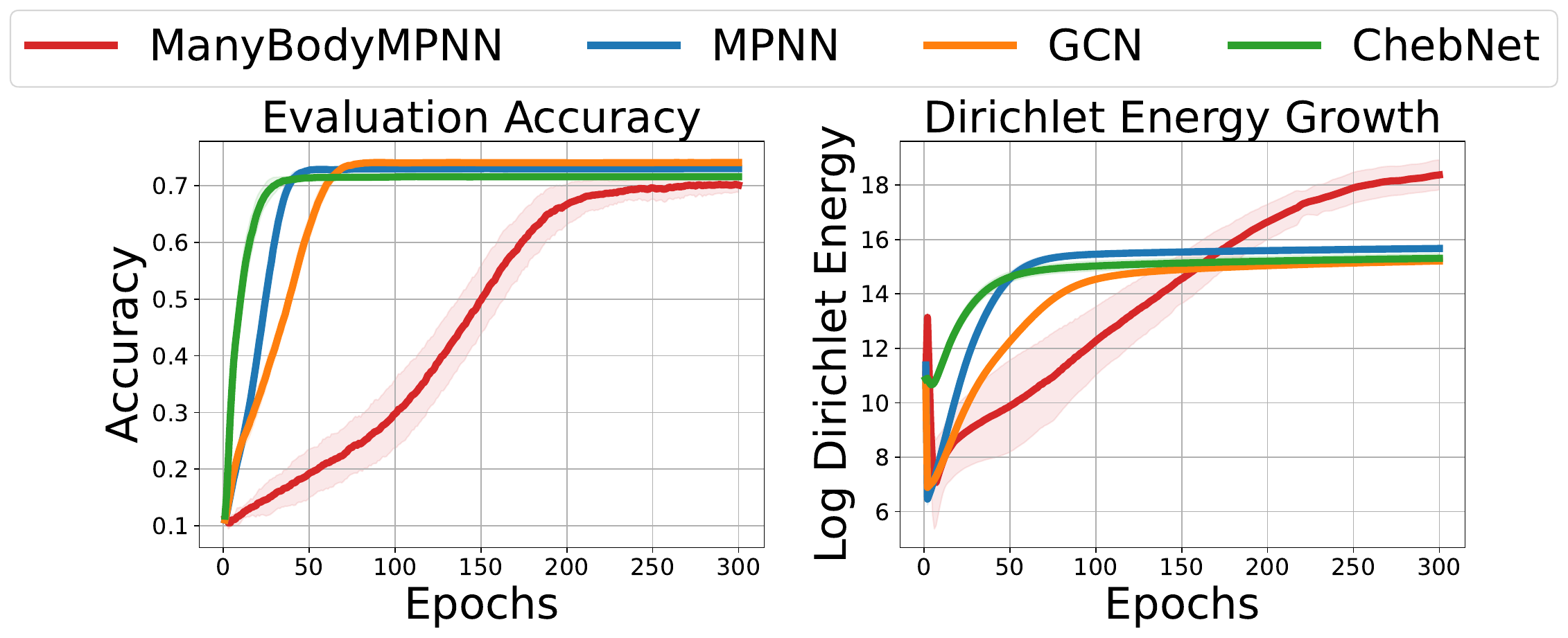}
    \caption{Test accuracy and Dirichlet energy growth of models over 300 epochs on a synthetic heterophilic graph.}
    \label{fig:hetero_test}
\end{figure}

Many-body MPNN generates much higher energy than other networks, despite its slower convergence, signalling its capability of learning the representations of contrasting nodes quite differently.

\subsection{Efficiency Benchmarks}\label{app:benchmark}
We benchmark model speeds for different layer counts in \cref{fig:bench}. We observe that with runtime increasing linearly with layer counts, many-body MPNN runs 3.44 times slower than ChebNet when having 20 layers, mainly attributed to higher-order interactions. We may further improve the running time as proposed in \cref{thm:time} in future efforts.

\begin{figure}[ht]
    \centering
    \includegraphics[width=0.85\linewidth]{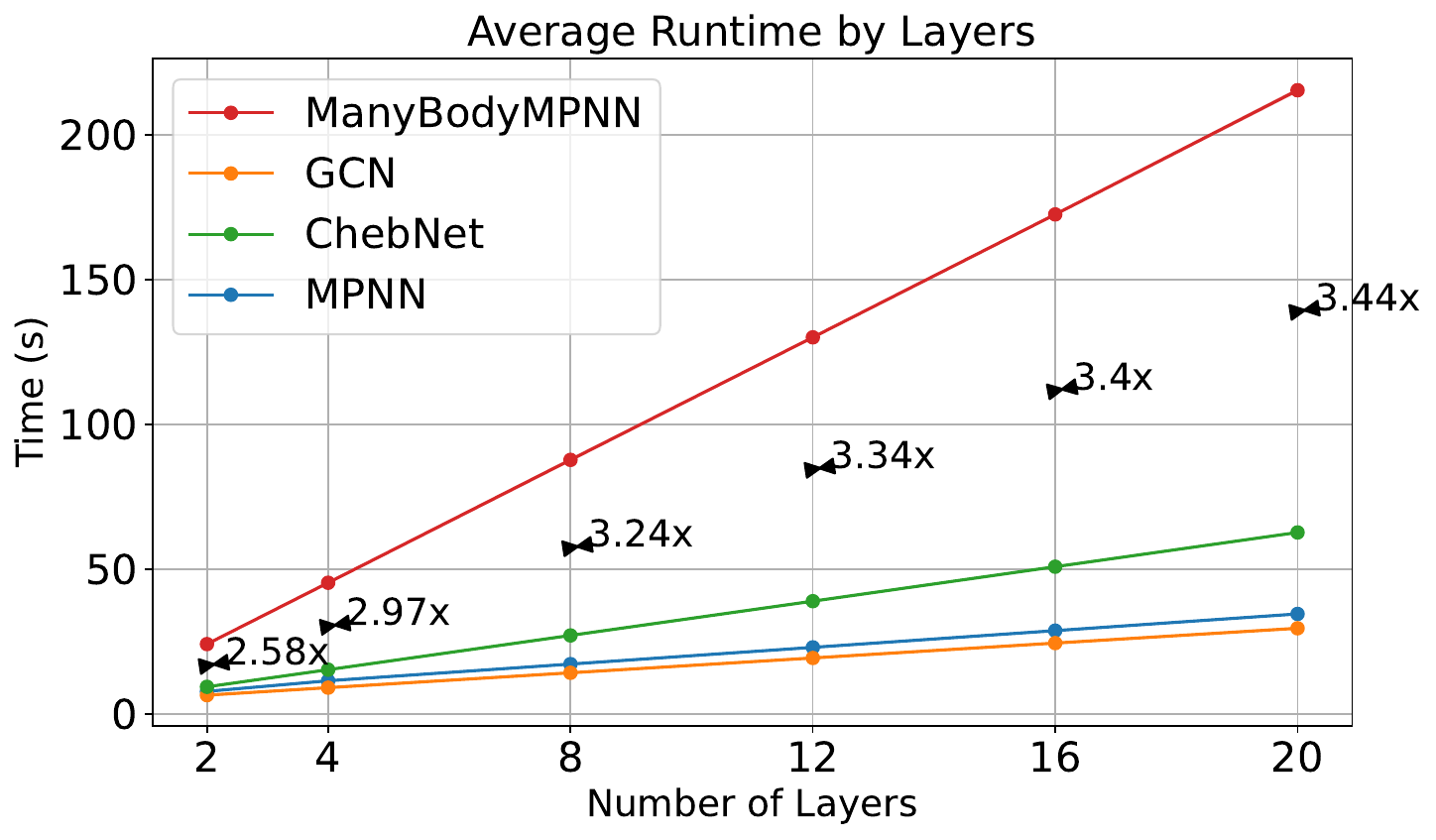}
    \caption{Runtime for different models, averaged over 30 runs. The models have varying numbers of layers, 16 hidden dimensions, and 4 correlation orders. The support for the Chebyshev expansion is 4. The batch size is 4, and the number of epochs is 50. Benchmarked on one NVIDIA RTX 2080 Ti GPU.}
    \label{fig:bench}
\end{figure}  

\section{Discussion}\label{sec:discuss}


\paragraph{The effectiveness of many-body MPNN for downstream tasks.} Our preliminary results show that many-body formulation scales well with increasing network depths when modeling the distant node interactions, and with increasing network widths when modeling complex local node interactions. It captures both aspects which are not easily learned through stacking more MPNN layers due to over-squashing \citep{digiovanni2024does}, or introducing wider layers.

\paragraph{Localized spectral filtering and efficiency concerns.} ChebNet applies signal processing to the entire graph, while our approach is more localized by applying to motifs while accounting for the edge contributions through its curvatures. We make the contribution of every sized motif explicit, and capture more subtle signals that only exists locally. One direct benefit of our approach is the higher energy upper-bound (\cref{thm:bound}, \cref{thm:chebnet}); on learning heterophilic graph node labels, it generates more energy over learning for contrasting node labels than ChebNet through its many-body interactions, despite its slightly harder convergence with its heavy parameterization, and we leave its improvement for future work. 



\paragraph{Towards learnable embeddings.} For input graphs with edge features, the Ricci curvatures can actually be replaced with learnable parameters \citep{batatia2023mace}. Additionally, depending on what other information is available in the graphs, such as node positional encoding, distances or commute time, etc. \citep{Black2023}, we may replace Ricci curvatures with these metrics to make their impact more explicit, which might open up exciting avenues for many-body MPNN research. 

\section{Related Work}



\paragraph{MACE.} The MACE (Message passing neural network for Atom-Centered Potentials) framework \citep{batatia2023mace}, symbolized by the correlation order $\nu$, predicts atomistic potentials within molecular structures. It adapts to the permutation invariance inherent in molecules, due to the indistinguishability of atoms of the same element, and ensures energy conservation. MACE can be expressed by:
\begin{equation}
    E(\mathbf{r}^N) = \sum_{i=1}^N \nu_i (\mathbf{r}_i, \mathbf{r}_{i}^{'}, \ldots, \mathbf{r}_{i}^{(\nu)})
\end{equation}
where $E(\mathbf{r}^N)$ represents the potential energy of a system with $N$ atoms, $\mathbf{r}_i$ denotes the position of the $i$-th atom, and $\mathbf{r}_{i}^{(\nu)}$ denotes its $\nu$-th order interaction.

\paragraph{Graph Convolutional Neural Networks.} ChebNet \citep{defferrard2016convolutional} introduces localized spectral filters within graph neural networks. The spectral filters are based on Chebyshev polynomials, $T_k(x)$, which serve as an efficient approximation to the graph Laplacian's eigendecomposition, facilitating faster convolutions. For a graph signal $\mathbf{x}$ and a filter $g_\theta$, the operation in ChebNet can be formulated as:
\begin{equation}
    \mathbf{x} * g_\theta = \sum_{k=0}^{K-1} \theta_k T_k(\widetilde{\mathbf{L}}) \mathbf{x}
\end{equation}
where $*$ denotes the convolution operation, $\theta$ is a vector of Chebyshev coefficients, $\widetilde{\mathbf{L}}$ is the scaled Laplacian, and $K$ represents the order of the polynomial approximation. 

\paragraph{Understanding Over-Squashing via Graph Curvature \citep{digiovanni2023oversquashing}.} We adapt from the Balanced Forman curvature formulation from \citet{digiovanni2023oversquashing} to weigh the edge importance in higher-order message construction. We derive the sensitivity bound (Theorem \ref{thm:bound}) based on the bound for two-body MPNN from \citet[Lemma 1]{digiovanni2023oversquashing}, and demonstrate the many-body formulation is capable to scale to deeper network topology.

\paragraph{Understanding Graph Convolutions via Energies \citep{rusch2023survey, topping2022understanding, Chamberlain2021}.} We leverage the concept of Dirichlet energy and measure how well models embed node differently enough in graphs with heterophily. While we do not ablate model layers while doing node classification, we see many-body MPNN is less likely to hit the energy flow bottleneck than other models through its high energy generation.


\section{Conclusion}


In this paper, we develop a theoretical formulation of many-body MPNN, which models higher-order node interactions beyond traditional two-body interactions. We address ChebNet's limitations in localizing graph transform effects by designing a technique that enumerates tree-shaped motifs for each correlation order and applies Chebyshev-expanded spectral filters. We derive sensitivity and energy bounds for many-body MPNN and evaluate its performance on synthetic graph energy regression and heterophilic node classification tasks. Our results show that many-body MPNN scales effectively with network depth and width, produces high energy, and maintains test accuracy on par with other convolutional networks, while exhibiting reduced susceptibility to either over-squashing or over-smoothing.

\section*{Acknowledgement}

We acknowledge the contribution of ideas from Francesco Di Giovanni (\href{mailto:francesco.di.giovanni@valencelabs.com}{francesco.di.giovanni@valencelabs.com}) and Michael Bronstein (\href{mailto:michael.bronstein@cs.ox.ac.uk}{michael.bronstein@cs.ox.ac.uk}) during their time at the University of Oxford. The paper was extended from a course (\href{https://www.cs.ox.ac.uk/teaching/courses/2023-2024/geodl/}{Geometric Deep Learning}) assessment paper solely written by the author.






\bibliographystyle{icml2024}
\bibliography{sample}
\clearpage
\newpage

\appendix


\newpage
\onecolumn

\section{Proofs}

\subsection{Discussion of \cref{thm:bound}}
The hidden feature $\bm{h}_i^{(t)}$, computed by a many-body MPNN with $t$ layers as in \cref{fm:expanded} is a differentiable function of the input node features $\{\bm{x}_1, \ldots, \bm{x}_n\}$.


We simplify \cref{fm:expanded} into more plain expressions in \cref{fm:plain}. We denote parameter tensors multiplied with constants (including edge curvatures) as $\bm{c}^{(a)}_{b,d}$'s for brevity.

\begin{equation}\label{fm:plain}
    \bm{h}_i^{(t)} = \bm{c}_0^{(t)}  \bm{h}_i^{(t-1)} \nonumber + \prod_{k=2}^{\nu} \Bigg( \bm{c}_{k,1}^{(t)} \bm{h}^{(t-1)}_{o(1)} + \bm{c}_{k,2}^{(t)} \bm{h}^{(t-1)}_{o(2)} + \cdots + \bm{c}_{k,\lvert N(i) \rvert}^{(t)} \bm{h}^{(t-1)}_{o(\lvert N(i) \rvert)} \Bigg) 
\end{equation}

, where $o(j)$ is used to denote the identity of $j$-th neighbor of node $i$.


With \cref{fm:plain}, we may relate the shortest path $\bar{p}$ to their sensitivity bounds. Assume $\mathcal{P}(v, u)$ is the set of paths connecting nodes $v$ and $u$, and $\ell(p)$ as length of path $p \in \mathcal{P}$, with $\ell(\bar{p}) = r$. Let $p_i$ be the $i$-th node along path $p$ and $\bm{S}^{(r)}_{p_i}= (\sum_{j \in N(p_i)} \bm{h}^{(r)}_j)$ be the direct sum of neighborhood of node $p_i$ at layer $r$. We may derive $\bm{h}^{(r+1)}_{u}$ by induction. The first term $\bm{h}^{(1)}_{p_1}$ along the path that contains $x_v$ terms can be derived as:

\begin{equation}\label{eq:induct1}
    \bm{h}^{(1)}_{p_1} = \sum^{\nu-1}_{k=1} \bm{c}_{p_1,k}\cdot (\bm{x}_v)^{k}\cdot (\bm{S}^{(0)}_{p_1})^{(\nu-k)}
\end{equation}

And $\bm{h}^{(r+1)}_u$ can be inductively derived as:

\begin{equation}\label{eq:induct2}
    \bm{h}^{(r+1)}_u = \sum_{p \in \mathcal{P}(v, u)}\sum^{\nu-1}_{k=1} \bm{c}_{p_{r},k}\cdot (\bm{h}^{(r)}_{p_{r}})^{k} \cdot (\bm{S}^{(r)}_{p_r})^{(\nu-k)}\\
\end{equation}

By induction, and the fact that $\bar{p} \in \mathcal{P}(v, u)$ can be obtained from powers of symmetrically normalized adjacency matrix $\mathcal{A}$ and the gradients from other paths would not have reached $u$ after $r$ propagations, we may finally derive the bound of $\lvert \partial \bm{h}_{u}^{(r+1)} / \partial \bm{x}_{v} \rvert$ as in \cref{thm:bound}.

\subsection{Proof of \cref{thm:invar}}


\begin{proof}
We assume that node features are initialized uniformly across graphs. We proceed by inducting on $\nu \ge 2$ (assuming that there are at least 2 bodies involved in message construction). 

When $\nu = 2$, $m^{(t)} = X^{(t)}$. From \cref{fm:2body}, we may have the permutation matrix moved inside the Laplacian's eigen-decomposition:
\begin{equation}
    \mathcal{F}_m(\bm{H}^{(t-1)}, \pi^\top L \pi) 
    = W^{(t)} \bm{U}^\top \left( \pi^\top g_{\theta_2}(\bm{\Lambda}) \pi \right) \bm{U} \bm{H}^{(t-1)} 
\end{equation}

Since $\Lambda$ is a diagonal matrix and $g_{\theta_2}(\bm{\Lambda})$ is linearly parameterizing the Chebyshev expansion of $\Lambda$, we may confirm that $\pi^\top g_{\theta_2}(\bm{\Lambda})\pi$ is an actual permutation of original outcomes. Due to the assumption that node features are initialized the same, final weighted outcome by $W^{(t)}$ will also be the same, and \textbf{by induction}, $\mathcal{F}_m(\bm{H}^{(t-1)}, L) = \mathcal{F}_m(\bm{H}^{(t-1)}, \pi^\top L\pi)$, which preserves permutation invariance.

For $\nu \ge 3$, the message construction includes the higher order term $Y^{(t)}$ from \cref{fm:msg}. Due to the fact that permutation of node identities will not alter the graph topology, for each enumerated neighbor set $J \in \eta(N(i))$, the edge set $\mathsf{E}_{i, J}$'s Ricci curvatures will not change. Hence, the weighted motif's Laplacian $\mathcal{L_{\text{Ricci-J}}}$ and its eigenvalues will be unaltered. With a fair $\eta$, each neighbor node should have equal probabilities to hold any position (in the local motif's topology) of any motif sizes. Hence permutation of node identities will not affect the message construction.
\end{proof}

\subsection{Proof of \cref{thm:energy}}\label{app:energy}
\begin{proof}
    We first prove the bound for a single-layer case, and then generalize the result to any number of layers. We assume that the node features at layer 0 for node $i \in \mathsf{N}$ are initialized with $X_i = \lvert \bm{H}_i^{(0)} \rvert \le \bm{h}$. We assume linear weights are initialized with $\lvert \theta_k \rvert \le \bm{w}_k$.

    We analyze the two-body $\bm{X}$ and many-body interaction term $\bm{Y}$ for node features $\bm{H}^{(1)}$ to approximate the bounds of $\mathcal{E}$ of graph after applying \cref{fm:update}'s updates. 

    For two-body interaction term $\bm{X}$, we leverage the fact that graph Laplacian $\mathcal{L}$ is symmetric and positive semi-definite, and hence its eigenvalues are bounded by $[0, \lambda_{\text{max}}]$. After shifting its eigenvalues to be $\Tilde{\bm{\Lambda}}$ as in \cref{fm:cheb}, and since Chebyshev polynomials applied on $[-1,1]$ are bounded by $[-1,1]$, it is straightforward that
    \begin{equation}
        \bm{X}^{(1)}_i \in [-d_{\text{max}}\bm{w}_2\bm{h}, d_{\text{max}}\bm{w}_2\bm{h}], \qquad \text{for } i \in \mathsf{N}
    \end{equation}

    For many-body interaction term $\bm{Y}$, there are finite number of neighboring sets for each node $i$ to form unique motifs, and with similar analysis, 
    \begin{equation}
        \lvert \bm{Y}^{(1)}_i \rvert \le \prod^{\nu}_{3} \nu{d_{\text{max}} \choose \nu-1}\bm{w}_{\nu}\bm{h}, \qquad \text{for } \nu \ge 3\text{, } i \in \mathsf{N}.
    \end{equation}

    Hence, $H^{(1)}_i \in [-\prod_{\nu} \nu{d_{\text{max}} \choose \nu-1}\bm{w}_{\nu}\bm{h}, \prod_{\nu} \nu{d_{\text{max}} \choose \nu-1}\bm{w}_{\nu}\bm{h}$], after we combine the two interaction terms.

    Then the graph potential is computed and bounded as 
    \begin{equation}
        \mathcal{E} \coloneqq H^{(1)T} \mathcal{L} H^{(1)} \in  \bigg[0, \lambda_{\text{max}} \lvert \mathsf{N} \rvert\bigg(\prod_{\nu} \nu{d_{\text{max}} \choose \nu-1}\bm{w}_{\nu}\bm{h}\bigg)^2  \bigg],
    \end{equation}
    where the maximum is obtained when $H^{(1)}$ aligns with the eigenvector corresponding to $\lambda_{\text{max}}$ and is at its maximal magnitude, and the minimum is zero, when graph nodes have the same embeddings.

    We generalize the result to $t \ge 2$ layers. It can be shown by simple induction that $\lvert H^{(t)} \rvert \le \prod_{\nu} \nu^t{d_{\text{max}} \choose \nu-1}^t \prod_t \bm{w}^{(t)}_{\nu} \bm{h}$.


    And the energy bound is hence $\bigg[0, \lambda_{\text{max}} \lvert \mathsf{N} \rvert\bigg(\prod_{\nu} \nu^t{d_{\text{max}} \choose \nu-1}^t \prod_t \bm{w}^{(t)}_{\nu} \bm{h}\bigg)^2  \bigg]$.
\end{proof}

\begin{lemma}\label{thm:higher}
    Higher-order interaction term generates more energy than lower-order terms.
\end{lemma}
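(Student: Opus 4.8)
The plan is to leverage the energy-bound machinery of \cref{thm:energy}, isolating the contribution of a single correlation order $\nu$ rather than aggregating over all orders at once. Recall from the proof of \cref{thm:energy} that the per-node magnitude contributed by the order-$\nu$ interaction term is bounded by $\nu \binom{d_{\text{max}}}{\nu-1}\bm{w}_{\nu}\bm{h}$, arising from enumerating the $\binom{d_{\text{max}}}{\nu-1}$ distinct neighbor sets $J$ of size $\nu-1$ and the factor-$\nu$ scaling of the motif filter. The strategy is to compare, for two orders $\nu_2 > \nu_1 \ge 2$, the quantities $B(\nu) \coloneqq \nu \binom{d_{\text{max}}}{\nu-1}$ that drive the respective energy bounds, and to argue that a larger $B$ propagates to strictly larger attainable Dirichlet energy through the quadratic form $\mathcal{E} = \bm{H}^T \mathcal{L} \bm{H}$.

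First I would fix a single layer and write the order-$k$ message magnitude as $B(k)\bm{w}_k\bm{h}$ with $B(k) = k\binom{d_{\text{max}}}{k-1}$, exactly as extracted from the $\bm{Y}$ analysis in \cref{app:energy}. Next I would establish the combinatorial monotonicity: for $d_{\text{max}}$ large relative to $\nu$ (the regime of interest for dense local neighborhoods), $B(\nu)$ is increasing in $\nu$, since $\binom{d_{\text{max}}}{\nu-1}$ grows as $\nu-1$ moves toward $d_{\text{max}}/2$ and the prefactor $\nu$ only reinforces this. I would then feed this into the energy expression: because the upper bound $\lambda_{\text{max}}\lvert\mathsf{N}\rvert (B(\nu)\bm{w}_\nu\bm{h})^2$ is monotone in $B(\nu)$, the higher-order term admits a strictly larger maximal energy, with the maximum attained (as in \cref{thm:energy}) when $\bm{H}$ aligns with the top eigenvector of $\mathcal{L}$ at maximal magnitude.

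To make the claim genuinely about energy \emph{generated} rather than merely a looser upper bound, I would argue that the order-$\nu$ filter, acting through the product structure $\prod_k$ in \cref{fm:msg}, can realize embeddings whose quadratic form against $\mathcal{L}$ strictly dominates those of any lower order, since the additional neighbor-set enumeration at order $\nu$ contributes terms absent from order $\nu_1$. Concretely, I would exhibit that the feasible set of output embeddings for order $\nu_1$ is contained in that for order $\nu$ (the lower-order motifs appear as a sub-collection inside the higher-order product), so the supremum of $\mathcal{E}$ can only increase, and strictly so whenever the extra generators are non-degenerate on the non-trivial spectrum of $\mathcal{L}$.

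The main obstacle I anticipate is the last step: moving from a comparison of \emph{upper bounds} to a statement that higher order genuinely \emph{produces} more energy. A larger upper bound does not by itself imply strictly larger realized energy, so the argument must either (i) restrict to the maximizing configuration where both bounds are tight and show tightness is inherited across orders, or (ii) carefully justify the nesting-of-feasible-sets claim, which is delicate because the product $\prod_{k=3}^{\nu}$ in \cref{fm:msg} couples orders multiplicatively rather than additively — so an order-$\nu$ model is not literally an order-$\nu_1$ model plus extra terms. Resolving this coupling, likely by fixing all but the top order and treating the incremental factor, is where the real care is needed; the combinatorial and spectral pieces are otherwise routine given \cref{thm:energy}.
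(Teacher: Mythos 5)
Your core step is exactly the paper's proof: the paper disposes of this lemma in one sentence, observing that with $t$, $\bm{h}$, and $\bm{w}$ held fixed, the per-order factor grows monotonically with $\nu$ --- the same comparison of bound factors $B(\nu)$ that you extract from \cref{thm:energy}. The differences are worth noting, though. First, the paper asserts monotonicity of $\nu\,(d_{\text{max}})^{\nu-1}$, not of $\nu\binom{d_{\text{max}}}{\nu-1}$: it silently relaxes the binomial coefficient to a power, which is monotone in $\nu$ unconditionally, whereas the binomial factor actually appearing in \cref{thm:energy} is not (take $d_{\text{max}}=4$: then $B(4)=4\binom{4}{3}=16$ but $B(5)=5\binom{4}{4}=5$). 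Your conditional claim --- monotonicity only when $\nu-1$ is small relative to $d_{\text{max}}$ --- is therefore more faithful to the quantity the energy bound actually contains; comparing the looser power-form bounds, as the paper does, says nothing about the ordering of the binomial-form bounds, let alone of realized energies. Second, the ``main obstacle'' you flag --- that a larger upper bound does not imply more energy is actually \emph{generated} --- is not resolved by the paper at all: its proof never goes beyond the bound comparison, so the lemma as proved is really a statement that the order-$\nu$ term admits a larger energy bound. Neither of your proposed remedies (tightness of the maximizing configuration, or nesting of feasible output sets) is attempted in the paper, and as you yourself observe, the nesting idea is problematic because \cref{fm:msg} couples orders multiplicatively rather than additively. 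In short: the first half of your proposal reproduces the paper's argument (with a needed correction to its combinatorics), and the unresolved second half is a gap in the paper's own proof rather than an idea you failed to find.
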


\begin{proof}
    With the same number of layers $t$, node and linear weight bounds ($\bm{h}$ and $\bm{w}$), the remaining higher order term $\nu (d_{\text{max}})^{\nu-1}$ grows monotonically with $\nu$.
\end{proof}

\begin{corollary}\label{thm:chebnet}
    ChebNet has strictly lower energy upper-bound than many-body MPNN, with a single layer.
\end{corollary}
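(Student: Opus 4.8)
The plan is to realize ChebNet's single-layer update as exactly the $\nu = 2$ restriction of the many-body update \cref{fm:update}, so that the very bounding argument used in the proof of \cref{thm:energy} specializes to ChebNet, and the only difference between the two upper bounds is the presence of the higher-order factors with $\nu \ge 3$. First I would observe that when the higher-order message $\bm{Y}^{(t)}$ is dropped, \cref{fm:update} reduces to $\bm{h}_i^{(1)} = \bm{h}_i^{(0)} + \bm{W}_x^{(1)} \bm{X}_i^{(1)}$, which is precisely ChebNet's convolution $\sum_{k} \theta_k T_k(\widetilde{\mathcal{L}})\bm{x}$ built from \cref{fm:2body} and \cref{fm:cheb}. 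Running the identical eigenvalue-shifting and Chebyshev-boundedness steps from \cref{thm:energy} then yields the single-layer feature bound $|\bm{X}_i^{(1)}| \le d_{\text{max}}\bm{w}_2\bm{h}$ and hence the ChebNet energy upper bound $\lambda_{\text{max}}|\mathsf{N}|(d_{\text{max}}\bm{w}_2\bm{h})^2$.

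Next I would compare this against the many-body bound from \cref{thm:energy} at $t = 1$, namely $\lambda_{\text{max}}|\mathsf{N}|\big(\prod_{\nu} \nu\binom{d_{\text{max}}}{\nu-1}\bm{w}_\nu\bm{h}\big)^2$. Adopting the common-weight convention of \cref{thm:energy} (a single upper bound $\bm{w}$ shared across correlation orders) so that the comparison isolates the structural factors, the many-body bound factorizes as the ChebNet two-body factor multiplied by the residual product $\prod_{\nu \ge 3}\nu\binom{d_{\text{max}}}{\nu-1}$. It then remains to show this residual product strictly exceeds one, which is exactly the content of \cref{thm:higher}: for each $\nu \ge 3$ admitting a nonempty $\nu$-motif the factor $\nu\binom{d_{\text{max}}}{\nu-1} \ge \nu \ge 3 > 1$, so the product is bounded below by a quantity strictly greater than one and, after squaring, the many-body bound strictly dominates ChebNet's.

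The main obstacle is securing genuine strictness rather than a merely non-strict inequality, which hinges on two points. The higher-order motifs must actually be realizable, requiring $d_{\text{max}} \ge \nu - 1$ so that $\binom{d_{\text{max}}}{\nu-1} \ge 1$ and the central node has enough neighbors to form a $\nu$-motif; on a graph too sparse to support any $\nu \ge 3$ motif the higher-order message of \cref{fm:msg} vanishes and the two bounds coincide, so the statement is read under the assumption that such motifs exist. The weight bounds must also not collapse the higher-order factors, which is why I would fix the common bound $\bm{w}$ as in \cref{thm:energy} rather than permitting order-dependent $\bm{w}_\nu \to 0$; under that convention the strict gap is driven purely by the combinatorial factors and \cref{thm:higher} delivers the strict inequality directly.
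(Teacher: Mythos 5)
Your proposal follows essentially the same route as the paper's own proof: both reduce the claim to \cref{thm:energy} by identifying ChebNet with the $\nu=2$ restriction of the many-body update, and both obtain strictness from the higher-order factors via \cref{thm:higher}. Your version is somewhat more careful than the paper's two-line argument, since you make explicit the conditions needed for genuine strictness (realizability of $\nu$-motifs, i.e.\ $d_{\text{max}} \ge \nu-1$, and weight bounds that do not collapse the higher-order factors), which the paper leaves implicit.
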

This is a corollary to Theorem \ref{thm:energy} since many-body formulation is identical to ChebNet when $\nu=2$, and the maximum energy contributed by many-body interaction term $Y$ is strictly positive (and higher than ChebNet by \ref{thm:higher}).

\subsection{Proof of \cref{thm:time}}
We will prove the time complexity upper-bound to be $O(\lvert \mathsf{E} \rvert d_{\text{max}}^2 + 2\lvert \mathsf{E} \rvert + \lvert \mathsf{N} \rvert(3d_{\text{max}})^{\nu-1}\nu^3)$, and discuss the result in paper on a separate note, following this proof.

\begin{proof}    
    There are three major parts of computation within a many-body MPNN update function: Balanced Forman curvatures for $G$, two-body interaction term, and many-body interaction term.
  
    Computing the Balanced Forman curvature on $G$ takes $O(\lvert \mathsf{E} \rvert d_{\text{max}}^2)$, according to the formulation in \citet{topping2022understanding}. The curvature value for each edge in $\mathsf{E}$ is pre-computed for constructing each motif's weighted Laplacian matrix. 

    For two-body interaction term, it has the same time complexity as ChebNet \citep{defferrard2016convolutional}, which is $O(\lvert \mathsf{E} \rvert \cdot K)$ where $K=2$ is the expansion order of Chebyshev polynomials from \cref{fm:2body}. 
    

    For many-body interaction term, we make a simplification that Balanced Forman curvatures (ranging from $-2$ to positive infinity) are rounded into $\{-1,0,1\}$ by the sign of continuous curvature values. The number of uniquely weighted motif's Laplacian of correlation order $\nu$ is:

    \begin{equation}
        \#{\mathcal{L}_\text{motif-J}} \coloneqq \sum_{i \in \mathsf{N}} {d_i \choose \nu-1} 3^{\nu-1}
    \end{equation}

    Since eigen-decomposition of Laplacians of shape $(\nu, \nu)$ takes $O(\nu^3)$ time, \cref{fm:msg} takes $O(\lvert \mathsf{N} \rvert(3d_{\text{max}})^{\nu-1}\nu^3 )$. And the overall time complexity of simplified many-body MPNN per layer is $O(\lvert \mathsf{E} \rvert d_{\text{max}}^2 + 2\lvert \mathsf{E} \rvert + \lvert \mathsf{N} \rvert(3d_{\text{max}})^{\nu-1}\nu^3)$.

    

\end{proof}

On a side note, we may further simplify many-body MPNN formulation through pre-computing the eigen-decomposition of motif's unweighted Laplacian, and instead of assigning edge curvatures as Laplacian weights, we learn the motif edge contribution through learnable parameters specific to each correlation order, from the Chebyshev expansion process (\cref{fm:cheb}). Since there is no need to compute the curvature values, the time complexity of this further simplified many-body MPNN is $O(2\lvert \mathsf{E} \rvert + \lvert \mathsf{N} \rvert(d_{\text{max}})^{\nu-1})$.

\section{Experiments}

\subsection{Details on Experiment Configurations}
For heterophilic graph node classification, We use 4 convolutional layers; we set correlation order $\nu$ to be 5 for many-body MPNN and Chebyshev filter size to be 3 for ChebNet. For synthetic spine graph mixing power experiments (\cref{app:mix}), we train many-body MPNN with layer number in $[4,8,13,25]$, widths in $[8,32,128,256]$, $\nu=5$ and Chebyshev filter size to be 3, and train 100 epochs with initial learning rate of 0.01 via Adam optimizer \citep{Kingma2015}. All experiments are bootstrapped with 10 runs using random seeds.

For graph regression tasks, we generally vary the graph types, model depths, and max orders, as shown in \cref{fig:random}, \ref{fig:layer}, and \ref{fig:maxorder}.

\subsection{Details on Synthesized Heterophilic Graphs and Missing Experimental Details}\label{app:mix}

We generate graphs with heterophily that often have contrasting node labels between neighbors. We synthesize such graphs by firstly generating random node labels according to the number of classes, then making edges between nodes with different labels at a higher probability $p_{\text{hetero}}$ (e.g. 0.8). 


We learn from Theorem \ref{thm:energy} that many-body MPNN may generate higher energies than ChebNet, which makes it less prone from over-smoothing issues. It is notable that while generating much energy, it does not suffer from over-fitting. 

To isolate the contribution of each interaction term of many-body MPNN, we plot the logarithmic energy growth for each $k$-body component, averaged by layers (\cref{fig:indv_growth}). We observe that the energy contribution is increasing over training process, while higher-body term contributes consistently higher energies as stated in \cref{thm:higher}.

\begin{figure}[ht]
    \centering
    \includegraphics[width=0.7\linewidth]{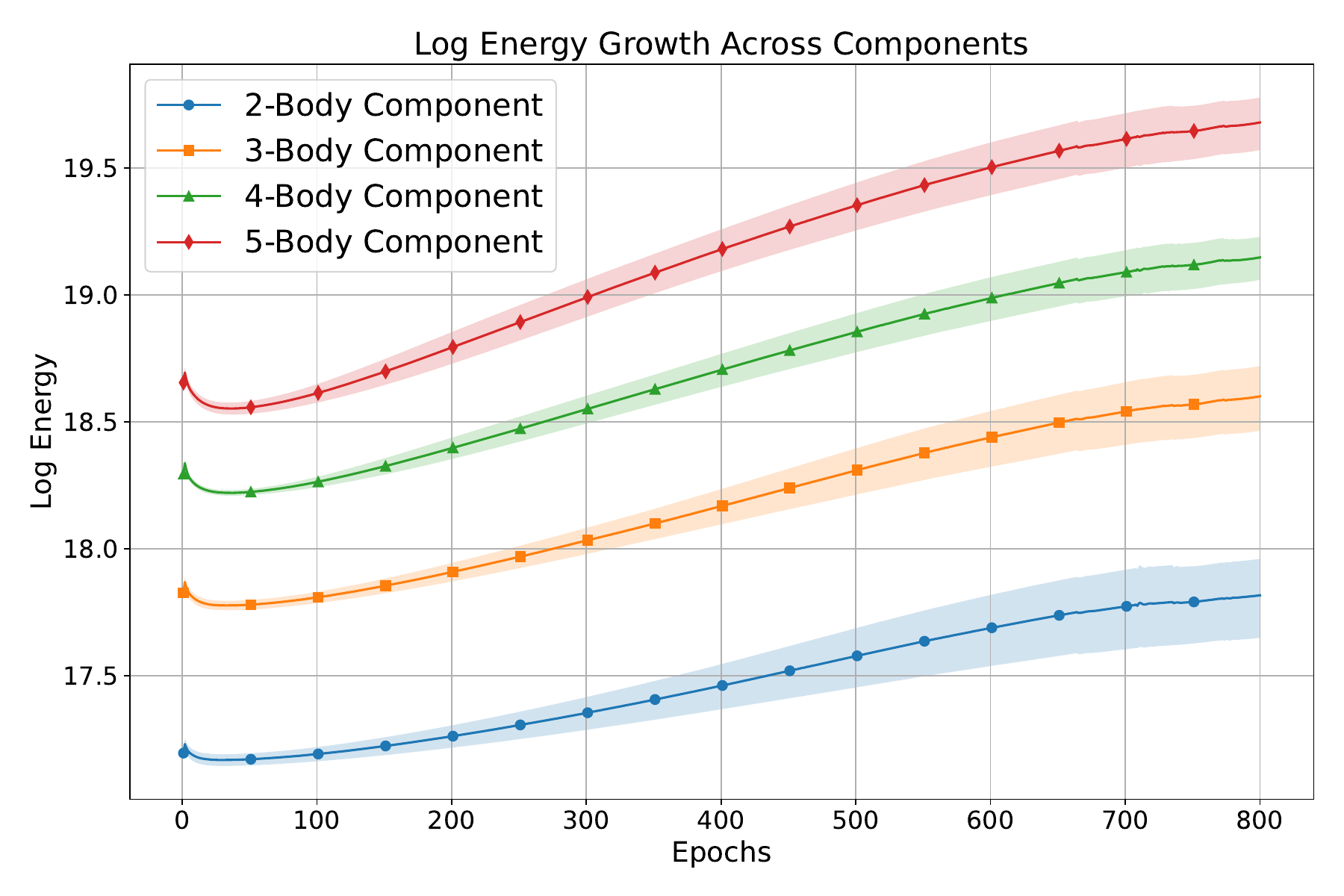}
    \caption{Log-energy growth from $k$-body term component ($k \le 5$) of Many-body MPNN.}
    \label{fig:indv_growth}
\end{figure}  

 


\subsection{Evaluating Mixing on Synthetic Spine Graphs}\label{app:sense}
To test the sensitivity bounds, we synthesized spine graphs with each node on the spine having multiple one-hop neighbor nodes attached only to it. If any edge on spine is erased, the graph becomes disconnected; hence these edges have generally low Ricci curvatures. 

To evaluate the mixing power of many-body interactions, we identify node groups of sizes [3 to $\nu$] and derive the Jacobians $\text{max}(\partial \bm{h_u^{(r)}} / \partial \bm{x}_v)$, with the maximum taken across embedding dimensions. We average the Jacobians from each group to represent the mixing power. We open-source the  \href{https://github.com/JThh/Many-Body-MPNN/blob/main/ManybodyMPNN_SyntheticZINC_OSQ_Playground.ipynb}{notebook} for future exploration.




\subsection{On Regressing Synthetic Graph Energies}


\subsubsection{Fixed-Shaped Graphs}\label{app:fix}
We also experiment on regular-shaped graphs such as \texttt{Ring}, \texttt{CrossedRing}, \texttt{CliquePath} as with \citet{topping2022understanding}, and \texttt{Spine} graphs having a long spine with each attached to a fixed number of one-hop neighbors. We synthesize graph energies in an alternative manner, based on their spectral properties: we take the absolute sum of eigenvalues of graph distance matrix that contains the pairwise shortest path lengths, or that of the graph adjacency matrix $A$, for emphasizing either node distances or local clustering.

The intuition for such energy construction is that graphs with more connectivity have wider spectral gaps, leading to larger target values, whilst making the formulation non-linear and not be modeled precisely with simple two-body message passing. 

Our many-body formulation models quite precisely the spectral properties of these regular-shaped graphs (Figure \ref{fig:layer}), and runs inductively and faster than eigen-decomposition on larger graphs due to lower time complexity (Proposition \ref{thm:time}). We observe that many-body MPNN often achieves the best performance when more layers (e.g., $\ge 32$) are stacked, indicating that it is less prone to over-squashing issues.
\begin{figure*}[ht]
    \centering
    \includegraphics[width=\linewidth]{./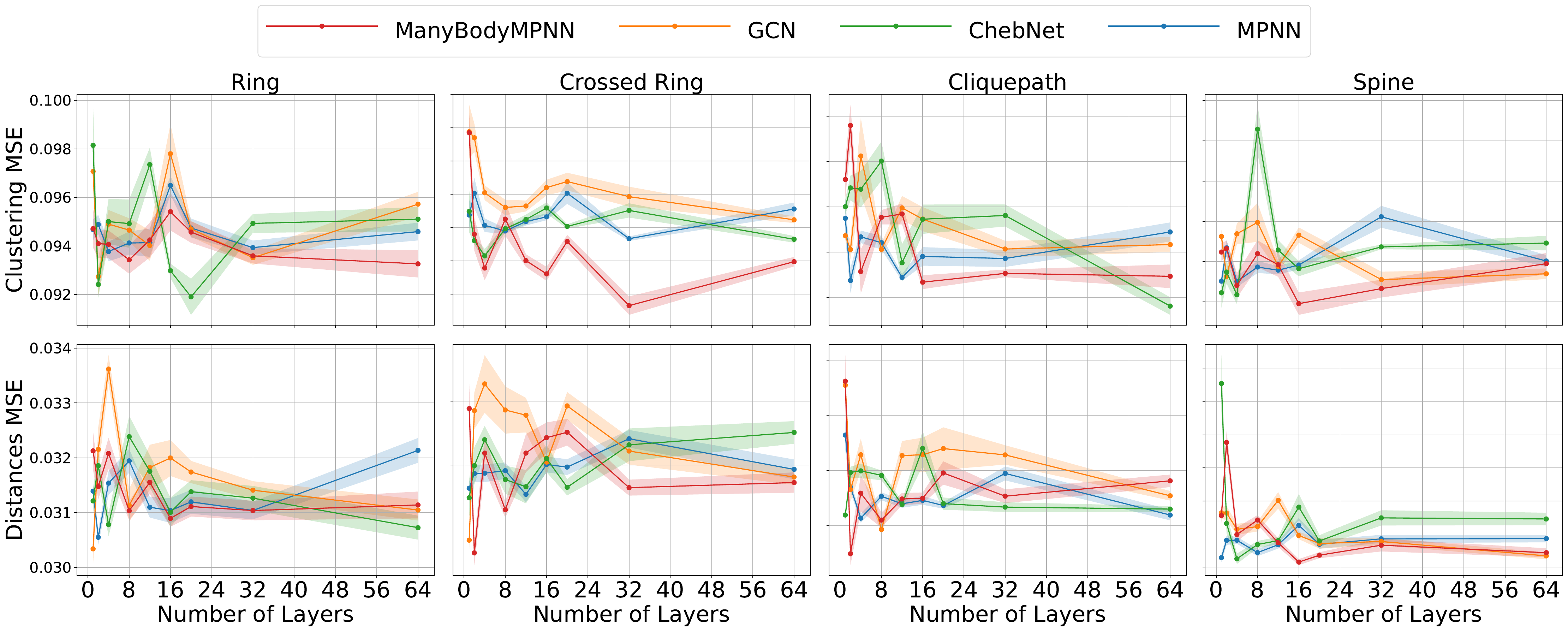}
    \caption{Test MSE on regressing the synthetic energies of regular-shaped graphs when varying the number of layers from 1 to 32 for different models. The graph energy functions are different between emphases on node distances and clustering and hence the two rows of plots only share the x-axis.}
    \label{fig:layer}
\end{figure*}




\subsubsection{On Correlation Orders}
We study different correlation orders and extract the patterns of performances on fixed-shaped graphs (see \cref{fig:maxorder}). We observe that generally across graph shapes, more layers lead to more fluctuations when increasing max orders, and more layers with high orders will likely lead to over-smoothing. We leave the detailed investigation of effective correlation orders with graph topology for future work.

\begin{figure*}[ht]
    \centering
    \includegraphics[width=\linewidth]{./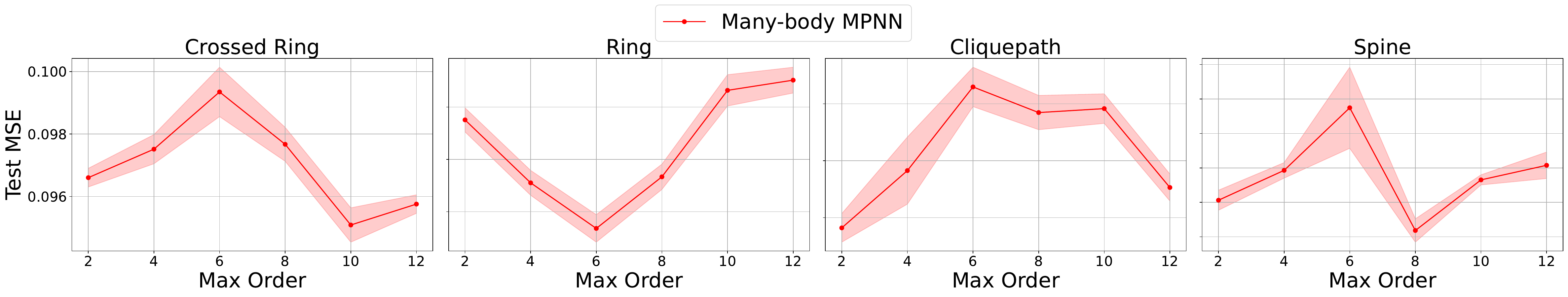}
    \caption{Test MSE for many-body MPNN of particular max correlation orders for regressing synthetic graph energies. There are 100 synthetic graphs of 300-500 nodes. The energy function is clustering-emphasized.}
    \label{fig:maxorder}
\end{figure*}







\section{Compute Resources}\label{app:compute}
We utilize one slice of A100 GPUs 80GB card (and one entire NVIDIA RTX 2080 Ti only for benchmarking) for speeding up graph convolutions and fully-connected layer operations, and AMD CPUs for graph synthesis and energy generation.

The experiments involved working with graph convolutional models on synthesized graphs of approximately 10,000 nodes with edge probabilities ranging from 0.15 to 0.3. The models consisted of 1-32 convolutional layers plus 2 linear layers, using the Adam optimizer, and with hidden dimensions ranging from 2 to 256. There are 5 types of graphs each with 100 instances, 4 types of models, and 10 repeated experiments for each setting. 

Based on the rough estimate of running 2 hours for each experiment series (e.g. \ref{app:fix}; there are 3 such series), we estimate $2\times 4 \times 3 \times 10 = 240$ GPU-hours to calculate the results in the paper, plus around 300 CPU-hours.







\end{document}